\newtheorem{corollary}{Corollary}
\newtheorem{example}{Example}
\newtheorem{theorem}{Theorem}
\newcommand{\otr}[0]{\textsc{ReS}}
\newcommand{\odt}[0]{\textsc{ODT}}
\newcommand{\lcn}[0]{\textsc{LCN}}
\newcommand{\viper}[0]{\textsc{Viper}}
\newcommand{\lmviper}[0]{\textsc{LM-Viper}}
\newcommand{\ndps}[0]{\textsc{NDPS}}
\newcommand{\propel}[0]{\textsc{Propel}}
\newcommand{\prl}[0]{$\pi$-\textsc{PRL}}
\newcommand{\PIRL}[0]{\textsc{PIRL}}
\newcommand{\otrhat}[0]{$\widehat{\textsc{ReS}}$}
\title{Synthesizing Programmatic Policies \\ with Actor-Critic Algorithms and ReLU Networks}
\author{
Spyros Orfanos \and
Levi H. S. Lelis
\affiliations
Department of Computing Science, University of Alberta, Canada\\
Alberta Machine Intelligence Institute (Amii)\\
\emails
\{orfanos, levi.lelis\}@ualberta.ca
}
\begin{document}

\maketitle

\begin{abstract}
Programmatically Interpretable Reinforcement Learning (PIRL) encodes policies in human-readable computer programs. Novel algorithms were recently introduced with the goal of handling the lack of gradient signal to guide the search in the space of programmatic policies. Most of such PIRL algorithms first train a neural policy that is used as an oracle to guide the search in the programmatic space. In this paper, we show that such PIRL-specific algorithms are not needed, depending on the language used to encode the programmatic policies. This is because one can use actor-critic algorithms to directly obtain a programmatic policy. We use a connection between ReLU neural networks and oblique decision trees to translate the policy learned with actor-critic algorithms into programmatic policies. This translation from ReLU networks allows us to synthesize policies encoded in programs with if-then-else structures, linear transformations of the input values, and PID operations. Empirical results on several control problems show that this translation approach is capable of learning short and effective policies. Moreover, the translated policies are at least competitive and often far superior to the policies PIRL algorithms synthesize.   
\end{abstract}

\section{Introduction}

Recent work in programatically interpretable reinforcement learning (PIRL) encodes policies in computer programs~\cite{NDPS}. Such policies are more amenable to verification and tend to be easier to understand than neural policies~\cite{viper}. On the downside, one needs to search over large and often non-differentiable spaces of programs to synthesize programmatic policies. Algorithms such as \viper~\cite{viper}, \ndps~\cite{NDPS}, and \propel~\cite{propel} were introduced to search specifically in such spaces. These algorithms rely on the search signal that a neural oracle provides. For example, \ndps\ uses program synthesis~\cite{1969:PROW} to synthesize a ``sketch'' of a policy and then uses Bayesian optimization~\cite{BayesianOpt} to find suitable real-valued parameters for the sketch such that the resulting programmatic policy is as similar as possible to the oracle.  \propel\ trains a neural policy concurrently with the synthesis process to ensure that the neural policy is not ``too different'' from the programs being synthesized, so it can be more effective in guiding the programmatic search.  

In this paper, we show that actor-critic algorithms~\cite{actor-critic} offer an effective and simple alternative to existing PIRL algorithms if the language used to encode the policies only accounts for if-then-else structures, linear transformations of the input, and PID operations. 
 Instead of training a neural policy to serve as an oracle for the synthesis procedure, we simply train a neural policy with an actor-critic algorithm and translate it into a programmatic one. 

Our translation procedure uses a connection between ReLU neural networks and oblique decision trees (\odt)~\cite{guido14,liwen18,Lee2020Oblique}. For a given input, the neurons of a ReLU network are either active (produce a value greater than zero) or inactive (produce zero). An activation pattern defines which neurons are active and which neurons are inactive for a given input. Since a linear combination of linear functions is also linear, the network represents a linear function for a fixed activation pattern. Thus, an \odt\ can be used to represent the neural network, where each branch of the tree represents an activation pattern of the network and the height of the tree is equal to the number of nodes in the network. 

Since an \odt\ is a program with if-then-else structures and linear transformations of the input, the \odt\ we obtain from the translation procedure is, in fact, a program written in a language previously used to encode programmatic policies~\cite{viper,NDPS,propel}. We use large networks for the critic and small ones for the actor as it is the latter that is translated into a program, and smaller networks result in shorter and often more interpretable programs. We call ReLU Synthesizer (\otr) this translation method and we evaluate it on several control problems. Empirical results show that \otr\ is at least competitive and often far superior to other PIRL algorithms in all evaluated domains.  

The contribution of this paper is to show that the translation of ReLU networks to \odt\ can be used to synthesize short and effective policies to challenging problems. This translation offers a drastically simpler method to PIRL if the language used to encode the policies only accounts for if-then-else structures, linear transformations of the input, and PID operations. 

\section{Related Work}

Other PIRL works explored languages more expressive than those considering only if-then-else structures. For example, \citeauthor{koul2018learning}~\shortcite{koul2018learning} and \citeauthor{Inala2020Synthesizing}~\shortcite{Inala2020Synthesizing} considered finite-state machines for encoding policies. Such finite-state machines can be translated into programs with loops, which can be helpful, as \citeauthor{Inala2020Synthesizing} showed, to allow the policy to generalize to unseen scenarios. Although they were evaluated with languages that support only if-then-else structures, the algorithms used in \ndps~\cite{NDPS} and \propel~\cite{propel} can potentially be used to search for programmatic strategies in spaces supporting loops. Due to the mapping from ReLU networks, \otr\ is unable to synthesize policies with loops. 

Programmatic policies can also be obtained with gradient descent and soft decision trees, such as Differentiable Decision Trees (DDT)~\cite{Silva20a}, Cascading Decision Trees (CDT)~\cite{CDT}, and Programatically Interpretable Reinforcement Learning (\prl) \cite{qiu2022programmatic}. We use \prl\ in our experiments because it presents much stronger results than DDT and CDT. \prl\ uses policy gradient algorithms to learn policies encoded as decision trees. The drawback of \prl\ is that the time complexity to evaluate a state during training is exponential on the height of the tree. Thus, it can only synthesize small programs (\citeauthor{qiu2022programmatic} induced trees of depth 6 in their experiments).

Locally constant networks (\lcn) also exploit the fact that each activation pattern of a neural network maps to a linear function of the input to induce trees from ReLU neural networks in the context of supervised learning~\cite{Lee2020Oblique}. In contrast to the mapping we present in this paper that directly translates a ReLU neural network to an \odt, \lcn\ learns a function of the derivatives of the ReLU neural networks. As a result, \lcn\ requires a more complex procedure for training the model. Namely, the gradient signal for training the model with gradient descent is insufficient, and the model is trained with an annealing procedure where it transitions from a smooth approximation of ReLU to ReLUs. \lcn's training procedure uses a dynamic programming procedure to compute all gradients efficiently because regular backpropagation is not efficient for training such models. We do not use \lcn\ in our experiments because it is not clear how to deal with its annealing procedure in the context of online learning. 

The relationship between ReLU networks and \odt\ was observed in works prior to \lcn~\cite{guido14,liwen18}. However, none of these previous work explored the use of this relationship to synthesize programmatic policies.

\section{Background and Notation}

\begin{figure*}[t!]
    \centering
    \begin{tabular}{ccc} 
    \toprule
\textbf{Neural Network} & \textbf{Weights and Biases} & \textbf{Inference Path} \\
\midrule
\begin{minipage}[h]{4.0cm}
    \begin{neuralnetwork}[height=3]
        \newcommand{\hinput}[2]{$x_#2$}
        \newcommand{\hfirst}[2]{$A^{2}_#2$}
        \newcommand{\hsecond}[2]{$A^{3}_#2$}
        \inputlayer[count=2, bias=false, text=\hinput]
        \hiddenlayer[count=3, bias=false, text=\hfirst] \linklayers
        \outputlayer[count=1, text=\hsecond] \linklayers
    \end{neuralnetwork}
\end{minipage}
&\begin{minipage}[h]{6.0cm} $\begin{aligned} 
W^{1} &= \begin{bmatrix} -2.7 & -0.8 \\
0.2& 2.0 \\ 
1.0 & -0.1 \end{bmatrix}, B^{1}= \begin{bmatrix} -0.4 \\
0.6 \\ 
1.2 \end{bmatrix} \\
W^{2} &= \begin{bmatrix} -2.0 & -2.4 & 1.2 \\ \end{bmatrix}, B^{2}= \begin{bmatrix} 1.4 \end{bmatrix} \\
P^{1} &= \begin{bmatrix} 0 & 0 \\
0 & 0 \\ 
1.0 & -0.1 \end{bmatrix}, V^{1} = \begin{bmatrix} 0 \\
0 \\ 
1.2 \end{bmatrix} \\
P^{2} &= \begin{bmatrix} 1.2 & -0.12 \end{bmatrix}, V^{2} = \begin{bmatrix} 2.84 \end{bmatrix}
\end{aligned}$\end{minipage}
&
\begin{minipage}[h]{4.0cm}
\begin {tikzpicture}[
        shorten >=1pt, auto, thick,
        node distance=0.7cm,
state/.style={circle,draw},
inference/.style={fill=black!20},
square/.style={rectangle}]
  \tikzstyle{level 1}=[sibling distance=2mm] 
  \tikzstyle{level 2}=[sibling distance=2mm] 
  \tikzstyle{level 3}=[sibling distance=2mm] 
\node[state, square, inference] (A) [] {\footnotesize $Z'^{2}_1$};
\node[state, square, inference] (B) [below left=of A] {\footnotesize $Z'^{2}_2$};
\node[state, square, inference] (C) [below left=of B] {\footnotesize $Z'^{2}_3$};
\node[state, square, inference] (D) [below right=of C] {\footnotesize $Z'^{3}_1$};
\path (A) edge node [left] {\footnotesize $\leq 0$} (B);
\path (B) edge node [left] {\footnotesize $\leq 0$} (C);
\path (C) edge node [right] {\footnotesize $> 0$} (D);
\end{tikzpicture}
\end{minipage}
 \\
 \midrule
 \multirow{2}{*}{$\mathbf{Z'}$\textbf{-functions}} & \multicolumn{2}{l}{$Z'^{2}_1 = -2.7 x_1 -0.8 x_2 - 0.4, Z'^{2}_2 = 0.2 x_1 + 2.0 x_2 + 0.6$} \\
& \multicolumn{2}{l}{$Z'^{2}_3 = 1.0 x_1 - 0.1 x_2 + 1.2, Z'^{3}_1 = 1.2 x_1 - 0.12 x_2 + 2.84$} \\

 \bottomrule
    \end{tabular}
    \caption{Neural network (left); set of weights and biases ($W$ and $B$) of the network and of the inference path for input $X = [0.5 \,\, -0.5]^T$ of the tree \otr\ induces ($P$ and $V$) (middle); inference path for $X$ (right). Each $Z'^{i}_k$-value (bottom) is equal to its corresponding $Z^{i}_k$-value for the input $X$.
    }
    \label{fig:example}
\end{figure*}
We denote matrices with upper-case letters and scalar values with lower-case letters. We consider fully connected neural networks with $m$ layers ($1, \cdots, m$), where the first layer is given by the input values $X$ and the $m$-th layer the output of the network. For example, $m=3$ for the network shown in Figure~\ref{fig:example}. Each layer $j$ has $n_j$ neurons ($1, \cdots, n_j$) where $n_1 = |X|$. 
The parameters between layers $i$ and $i+1$ of the network are denoted by $W^{i} \in \mathbb{R}^{n_{i+1} \times n_i}$ and $B^{i}  \in \mathbb{R}^{n_i \times 1}$. 
The $k$-th row vector of $W^{i}$ and $B^{i}$, denoted $W^{i}_k$ and $B^{i}_k$, represent the weights and the bias term of the $k$-th neuron of the $(i+1)$-th layer. Figure~\ref{fig:example} shows an example where $n_1 = 2$ and $n_2 = 3$. Thus, $W^1 \in \mathbb{R}^{3 \times 2}$ and the first row vector of $W^1$ ($[-2.7 \,\, -0.8]$) and the first entry of $B^1$ ($-0.4$) provide the weights and bias of the first neuron in the hidden layer of the model. 
%
Let $A^{i} \in \mathbb{R}^{n_i \times 1}$ be the values of the neurons of the $i$-th layer, where $A^{1} = X$ and $A^{m}$ is the output of the model. A forward pass in the model computes the values of $A^{i} = g(Z^{i})$, where $g(\cdot)$ is an activation function, and $Z^{i} = W^{i-1} \cdot A^{i-1} + B^{i-1}$. We compute the values of $A^{i}$ in the order of $i = 2, \cdots, m$. 

We consider ReLU activation functions: $\text{ReLU}(x) = \max(0, x)$~\cite{relu} for all neurons in hidden layers; we consider Logistic and linear activation functions for neurons in the output layer for classification and regression problems, respectively. \otr\ also works with Leaky ReLUs~\cite{leaky-relus} in hidden layers and Softmax in the output layer; we discuss the use of such functions in the next section. 

\begin{figure}
    \begin{align*}
        E ::= \; & C \; | \; \textbf{if} \; B \; \textbf{then} \; E \; \textbf{else}  \; E \\ 
        B ::= \; & P \cdot X + v \leq 0 \\
        C ::= \; & P \cdot X + v
    \end{align*}
    \vspace{-7mm}
    \caption{DSL for regression oblique decision trees}
    \label{fig:dsl}
\end{figure}

An oblique decision tree $T$ is a binary tree whose nodes $s$ define a function $P \cdot X + v \leq 0$ of the input $X$, where $P$ and $v$ are parameters of $s$. Each leaf of $T$ contains a prediction for $X$. 
For classification tasks, the leaves return a label. 
For regression tasks, we consider linear model trees \cite{linear-model-trees}, where each leaf returns the value of $X \cdot P + v$ as the prediction for $X$. 
An oblique tree for regression is equivalent to the programs written in a commonly used DSL from the programmatic policy literature (see Figure~\ref{fig:dsl} for the DSL). 
A tree $T$ produces a prediction for $X$ by defining a path from the root to a leaf of $T$, which we call an \emph{inference path}. An inference path is determined as follows. If $P \cdot X + v \leq 0$ is true for the root, then one follows to the root's left child; the right child is followed otherwise. This rule is applied until we reach a leaf node. 


\section{ReLU Networks to Oblique Decision Trees}
\label{sec:oblique}


In this section, we present Oblique Trees from ReLU Neural Networks (\otr), an algorithm for inducing an oblique tree that is equivalent to a fully connected neural network that uses ReLU activation functions. In our presentation we assume a single neuron in the output layer that uses either a Logistic function (classification tasks) or a linear function (regression tasks). 
Later we show how \otr\ generalizes to multi-class tasks where the network uses a Softmax function in its output layer. 

\otr\ leverages the metric of activation patterns of a neural network~\cite{raghu2017}, which considers the ``active'' and ``inactive'' neurons for a fixed input. For ReLU functions, the $k$-th neuron of the $i$-th layer is active if $Z_k^{i} > 0$ and is inactive otherwise. 
A network with $n$ neurons results in a tree with depth $n$, where each node on a path from root to leaf represents a different neuron and each path represents an activation pattern.
Once an activation pattern is defined, the $Z^i_k$-values of the network are simply a linear transformation of the input $X$~\cite{Lee2020Oblique}. 
\otr\ chooses the parameters $P$ and $v$ of each node such that $Z^i_k = P \cdot X + v$ for the node representing the $k$-th neuron of the $i$-th layer.  

Algorithms~\ref{alg:or} and \ref{alg:recursive_otr} show the pseudocode of \otr, which receives the weights and biases of a network $R$ and the problem type $t$ (either binary classification or regression); \otr\ returns an oblique decision tree equivalent to $R$. \otr\ recursively processes all neurons of layer $i$ before processing neurons of layer $i+1$. 
It starts with the root of the tree, which 
represents the first neuron of layer $j=2$ ($l = 1$ and $k = 1$ in the pseudocode) and finishes with the output neuron. 

\otr\ rewrites the functions $Z$ of the network in terms of input values $X$, the resulting functions are denoted $Z'$. Similarly to how the $Z$-values are computed in terms of $W$ and $B$, the $Z'$-values are computed in terms of the matrices $P$ and $V$. We define $Z'^i_k = P^{i-1}_k \cdot X + V^{i-1}_k$, where $P^{i-1}_k$ and $V^{i-1}_k$ are the $k$-th row vector of $P^{i-1}$ and $V^{i-1}$. 
Once the values of $P_k^i$ and $V_k^i$ are first computed (line~\ref{line:init_p_v} of Algorithm~\ref{alg:or} or lines~\ref{line:transform_w} and \ref{line:transform_b} of Algorithm~\ref{alg:recursive_otr}), $Z'^{i+1}_k = Z^{i+1}_k$. 
However, the value $Z'^{i+1}_k$ of a node must be equal to $A^{i+1}_k$, so that the weights $P^{i+1}$ and $V^{i+1}$ of the next layer can be computed (in our example $P_k^1$ and $V_k^1$ are such that $A^{i+1}_k = Z'^{i+1}_k = P^i_k \cdot X + V^i_k$). In line~\ref{line:set_to_zero} of Algorithm~\ref{alg:recursive_otr} the values of $P^i_k$ and $V^i_k$ are set to zero, so that the $Z'^{i+1}_k$ computed from the $V$ and $P$ matrices passed as parameters to the recursive calls in lines~\ref{line:recurse_left} and \ref{line:recurse_right} are equal to $A^{i+1}_k$. The first recursive call treats the case where $A^{i+1}_k = Z^{i+1}_k$ and recursively creates the right child of node $r$. The second call treats the case where $A^{i+1}_k = 0$ and recursively creates the left child of $r$. 

The matrices $P^1$ and $V^1$ are equal to $W^1$ and $B^1$ (line~\ref{line:init_p_v}) because the functions $Z^2$ are defined in terms of $X$. 
Matrices $P^l$ for $l > 1$ are computed in line~\ref{line:transform_w} with the operation $W^l_k \cdot P^{l-1}$. This operation performs a weighted sum of the values $p_j$ 
of the neurons $q$ from the previous layer; the sum is weighted by the value in $W$ representing the connection between the neuron $k$ being processed with neuron $q$ from the previous layer. In our example, this operation was used to compute $P^2 = W^2_1 \cdot P^1 = [1.2 \,\, -0.12]$. Similarly, $V^l$ is computed with the operation $W^l_k \cdot V^{l-1} + B^l_k$, which is a weighted sum of the bias terms of the neurons in the previous layer, added to the bias term of the current neuron. In our example, we computed $V^2 = W^2_1 \cdot V^1 + B^2_1 = 2.84$. Once the $k$-th rows of $P^l$ and $V^l$ are computed, we create the node representing the $k$-th neuron of layer $l+1$ with the parameters $P_k^l$ and $V_k^l$ (line~\ref{line:create_node}).

For classification tasks, 
the left child of the node representing the output neuron returns the label $0$, while its right child returns the label $1$ (line~\ref{line:classification}). 
For regression tasks, the node representing the output neuron is a leaf and returns the value of $P^{m-1}_1 \cdot X + V^{m-1}_1$ as the prediction for $X$. As one can note, for classification tasks, the tree has one extra level if one considers the nodes with labels. 

\begin{example}
Consider the network $R$ in Figure~\ref{fig:example}. 
\otr\ induces an oblique decision tree $T$ that is equivalent to $R$ (the value of the leaf node in the inference path for fixed values of $x_1$ and $x_2$ is equal to the output of $R$ for the same inputs). Figure~\ref{fig:example} shows the inference path (right-hand side) for $x_1 = 0.5$ and $x_2 = -0.5$ of the tree \otr\ induces from $R$. 
Each node on an inference path of $T$ defines a function $Z'^i_{k}$ of the input values that matches the value of $Z^i_{k}$ of $R$ for a fixed input (we omit the parameters of the functions to simplify notation). 
The $Z'$ functions of the nodes representing neurons in the layer $i=2$ of $R$ are equal to the $R$'s $Z$-functions because $Z$ is already defined in terms of the input. Here, $Z'^{2}_1 = -2.7 x_1 -0.8 x_2 - 0.4 = Z^{2}_1$, $Z'^{2}_2 = 0.2 x_1 + 2.0 x_2 - 0.6 = Z^{2}_2$, and $Z'^{2}_3 = 1.0 x_1 - 0.1 x_2 + 1.2 = Z^{2}_3$. The value of $Z^{3}_1$ is computed in $R$ according to the output values of the neurons in layer $i=2$. \otr\ defines $Z'^{3}_1 = Z^{3}_1$ in terms of $x_1$ and $x_2$ as follows. 

We define matrices $P^{1}$ and $V^{1}$ where the $k$-th row of $P^{1}$ defines the weights of the $k$-th neuron of layer $2$ in terms of $x_1$ and $x_2$. Similarly, the $k$-th entry of $V^1$ defines the bias term of the $k$-th neuron when the value the neuron produces is written in terms of $x_1$ and $x_2$. In this example, for fixed $x_1 = 0.5$ and $x_2 = -0.5$, $P^{1}$ and $V^{1}$ are $W^{1}$ and $B^{1}$ with the first two rows filled with zeros. This is because the first two neurons of layer $i=2$ are inactive for $x_1$ and $x_2$. The weights of $Z'^{3}_1$ are given by $P^{2}_1$, where $P^{2}_1 = W^{2}_1 \cdot P^{1}$ and the bias term by $V^{2}_1$, where $V^{2}_1 = W^{2}_1 \cdot V^{1} + B^{2}_1$ (Figure~\ref{fig:example} shows the matrices $P^2$ and $V^2$). As can be verified, $Z'^{3}_1 = Z^{3}_1$ for $x_1 = 0.5$ and $x_2 = -0.5$. For the decision tree: $Z'^{3}_1 = 1.2 \cdot 0.5 - 0.12 \cdot (-0.5) + 2.84 = 3.5$. For the neural network: 
\begin{align*}
    Z^2 &= \begin{bmatrix} -2.7 & -0.8 \\
0.2& 2.0 \\ 
1.0 & -0.1 \end{bmatrix} \cdot \begin{bmatrix} 0.5 \\
-0.5 \end{bmatrix} + \begin{bmatrix} -0.4 \\
0.6 \\ 
1.2 \end{bmatrix} = \begin{bmatrix} -1.35 \\
-0.30 \\ 
1.75 \end{bmatrix} \\
A^2 &= \begin{bmatrix} 0 \\
0 \\ 
1.75 \end{bmatrix} \\
Z^3 &= \begin{bmatrix} -2.0 & -2.4 & 1.2 \\ \end{bmatrix} \cdot \begin{bmatrix} 0 \\
0 \\ 
1.75 \end{bmatrix} + \begin{bmatrix} 1.4 \end{bmatrix} = 3.5
\end{align*}
If the output neuron uses a linear function, then the output of network $A^{3}_1 = Z^{3}_1 = Z'^{3}_1$; if it uses a Logistic function $g$, then $T$ predicts class $1$ because $Z'^{3}_1 = 3.5$ and $g(x) > 0.5$ if and only if $x > 0$ for the Logistic function; it would predict class $0$ if $Z'^{3}_1 \leq 0$. 
\end{example}

\begin{algorithm}[!t]
\caption{{\sc Oblique Trees from ReLU Networks (OTR)}}
\label{alg:or}
\begin{algorithmic}[1]
\REQUIRE Neural Network's Weights $W$ and biases $B$, problem type $t$ (classification or regression)
\ENSURE Oblique tree $T$.
\STATE Initialize $P$ as a set of matrices $P^{1}, P^{2}, \cdots, P^{m-1}$, where $P^{i} \in \mathbb{R}^{n_{i+1} \times n_{1}}$.  
\STATE Initialize $V$ as a set of matrices $V^{1}, V^{2}, \cdots, V^{m-1}$, where $V^{i} \in \mathbb{R}^{n_{i+1} \times 1}$.
\STATE $P^{1} \gets W^{1}$, $V^{1} \gets B^{1}$ \label{line:init_p_v}
\STATE $r \gets$ {\sc Empty-Node} \label{line:init_root}
\STATE {\sc Induce-Oblique-DT}($r$, $t$, $1$, $1$, $W$, $B$, $P$, $V$) \label{line:first_call}
\RETURN $r$
\end{algorithmic}
\end{algorithm}

\begin{algorithm}[!htb]
\caption{{\sc Induce Oblique DT}}
\label{alg:recursive_otr}
\begin{algorithmic}[1]
\REQUIRE Node $r$, problem type $t$, layer $l$ and neuron $k$, matrices $W$, $B$, $P$, $V$.
\IF{$l > 1$}
\STATE $P^{l}_k \gets W^{l}_k \cdot P^{l-1}$ \label{line:transform_w}
\STATE $V^{l}_k \gets W^{l}_k \cdot V^{l-1} + B^{l}_k$ \label{line:transform_b}
\ENDIF
\IF{$l = m - 1$} \label{line:leaf_nodes}
\IF{$t$ is classification}
\STATE $r \gets (P^{l}_k, V^{l}_k, \text{left}=0, \text{right}=1)$ \label{line:classification} 
\ENDIF
\IF{$t$ is regression}
\STATE $r \gets (P^{l}_k, V^{l}_k)$ \label{line:regression} 
\ENDIF
\RETURN \label{line:end_leaf_nodes}
\ENDIF
\STATE $e \gets $ {\sc Empty-Node}, $d \gets $ {\sc Empty-Node}
\STATE $r \gets (P^{l}_k, V^{l}_k, \text{left}=e, \text{right}=d)$ \label{line:create_node} 
\IF{$k = n_l$} \label{line:end_layer}
\STATE $k \gets 1$
\STATE $l \gets l + 1$
\ELSE
\STATE $k \gets k + 1$ \label{line:increment_neuron}
\ENDIF
\STATE {\sc Induce-Oblique-DT}($d$, $t$, $l$, $k$, $W$, $B$, $P$, $V$) \label{line:recurse_right}
\STATE $P^{l}_{k} \gets \begin{bmatrix} 0 & \cdots & 0 \end{bmatrix}$, $V^{l}_{k} \gets \begin{bmatrix} 0 \end{bmatrix}$ \label{line:set_to_zero}
\STATE {\sc Induce-Oblique-DT}($e$, $t$, $l$, $k$, $W$, $B$, $P$, $V$) \label{line:recurse_left}
\end{algorithmic}
\end{algorithm}

\begin{theorem}
Let $W$ and $B$ be the weights and biases of a fully connected neural network $R$ whose hidden-layer units use ReLU activation functions and the single unit of its output layer uses a Logistic or a linear activation function. The oblique decision tree $T$ \otr\ induces with $W$ and $B$ is equivalent to $R$, that is, $T$ and $R$ produce the same output for any input $X$.  
\end{theorem}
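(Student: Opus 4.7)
The plan is to prove equivalence by structural induction on the traversal order used by Algorithm~\ref{alg:recursive_otr}, tracking the inference path of $T$ for an arbitrary but fixed input $X$. The central invariant I would maintain is: at the moment the algorithm processes the $k$-th neuron of layer $l+1$ along the inference path taken by $X$, the row vectors $P^{l-1}_j$ and scalars $V^{l-1}_j$ computed for all previously processed neurons $j$ in layer $l$ satisfy $P^{l-1}_j \cdot X + V^{l-1}_j = A^l_j$, where $A^l_j$ is the value that $R$ computes at that neuron for input $X$. The two recursive calls in lines~\ref{line:recurse_right} and \ref{line:recurse_left} encode exactly the two cases of the $\mathrm{ReLU}$: the right child (chosen when $Z'^{l}_k > 0$) leaves $P^{l-1}_k, V^{l-1}_k$ untouched so that $P^{l-1}_k \cdot X + V^{l-1}_k = Z^l_k = A^l_k$, while the left child (chosen when $Z'^l_k \leq 0$) first zeroes out these entries in line~\ref{line:set_to_zero}, matching $A^l_k = 0$.

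For the base case ($l = 1$), line~\ref{line:init_p_v} sets $P^1 = W^1$ and $V^1 = B^1$, so immediately $P^1_k \cdot X + V^1_k = W^1_k \cdot X + B^1_k = Z^2_k$, and the branching condition at each layer-2 node in $T$ coincides with the ReLU test in $R$. For the inductive step, lines~\ref{line:transform_w}--\ref{line:transform_b} compute $P^l_k = W^l_k \cdot P^{l-1}$ and $V^l_k = W^l_k \cdot V^{l-1} + B^l_k$, and a short linearity calculation combined with the inductive hypothesis yields
\[
P^l_k \cdot X + V^l_k \;=\; \sum_{j=1}^{n_l} W^l_{k,j}\,(P^{l-1}_j \cdot X + V^{l-1}_j) + B^l_k \;=\; \sum_{j=1}^{n_l} W^l_{k,j}\, A^l_j + B^l_k \;=\; Z^{l+1}_k.
\]
Hence the split at the current tree node tests exactly the sign of $Z^{l+1}_k$, and after descending into either child the invariant is preserved for neuron $k$ of layer $l+1$.

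Applied to the output neuron, the invariant gives $P^{m-1}_1 \cdot X + V^{m-1}_1 = Z^m_1 = A^m_1$ in the linear case, so the regression leaf in line~\ref{line:regression} returns precisely $R$'s output. For the Logistic case, since the Logistic $g$ satisfies $g(z) > 0.5 \iff z > 0$, the label-$1$ versus label-$0$ assignment in line~\ref{line:classification} agrees with $R$'s prediction. The step I expect to require the most care is the bookkeeping around line~\ref{line:set_to_zero}: I must argue that zeroing $P^l_k$ and $V^l_k$ before entering the left subtree correctly propagates through all subsequent calls in that subtree without interfering with the untouched entries for other neurons in layer $l+1$, and that the entries for the current neuron $k$ are no longer read in an incompatible way once the recursion moves to layer $l+2$. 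Formalizing this as a statement about the state of the matrices $(P, V)$ at each recursive call, and observing that once the algorithm leaves layer $l$ the row $P^{l-1}_k$ is only ever read through the linear combination $W^l_{\cdot} \cdot P^{l-1}$, should close the argument cleanly.
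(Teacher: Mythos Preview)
Your proposal is correct and follows essentially the same approach as the paper's proof: induction on the layers along the inference path for a fixed $X$, showing first that the node function equals $Z^{l+1}_k$ via the linear combination in lines~\ref{line:transform_w}--\ref{line:transform_b}, then case-splitting on the ReLU sign to obtain $A^{l+1}_k$, and finally handling the regression and Logistic leaves. Your invariant (quantified over all previously processed neurons $j$ in the layer) and your explicit expansion $\sum_j W^l_{k,j}(P^{l-1}_j\cdot X + V^{l-1}_j)$ are in fact tidier than the paper's version, which states the inductive hypothesis only for a single index $k$ and writes $P^{i-2}_k$ where the full matrix $P^{i-2}$ is needed; the extra bookkeeping you flag around line~\ref{line:set_to_zero} is exactly the point the paper leaves implicit.
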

\begin{proof}
For a fixed input $X$, we prove $Z'^{i}_k = A^{i}_k$, where $Z'^{i}_k = P_k^{i-1} \cdot X + V_k^{i-1}$ and $A^{i}_k = \text{ReLU}(Z^{i}_k)$ for all values of $Z'^{i}_k$ encountered along the inference path of $X$ in $T$. 
If $Z'^{i}_k = A^{i}_k$ for any $i$ and $k$, the $Z'$-value of the leaf node on the inference path matches the output of $R$ for a fixed $X$. Thus, both $T$ and $R$ produce the same output for any fixed input $X$. 
 
Our proof is by induction on the layer $i$. The base case considers $i=2$, the first layer of the model. 
\begin{align*}
Z_k^{2} &= W_k^{1} \cdot X + B_k^{1} \,(\text{definition of } Z^{2})  \\ 
&= P_k^{1} \cdot X + V_k^{1} \,(\text{line~\ref{line:init_p_v} of Algorithm~\ref{alg:or}})
\end{align*}
During inference, if $P_k^{1} \cdot X + V_k^{1} \leq 0$, we follow the left child of the node with parameters $P_k^{1}$ and $V_k^{1}$. In this case, $P_k^{1}$ is set to a vector of zeros and $V_k^{1}$ is set to zero (line~\ref{line:set_to_zero} of Algorithm~\ref{alg:recursive_otr}), thus $P_k^{1} \cdot X + V_k^{1} = 0$. If $P_k^{1} \cdot X + V_k^{1} > 0$, then we follow the right child and $P_k^{1} \cdot X + V_k^{1} = W_k^{1} \cdot X + B_k^{1}$. Therefore, $Z'^{2}_k = A^{2}_k$, for any $k$. 
The inductive hypothesis assumes that $A^{i-1}_k = P_k^{i-2} \cdot X + V_k^{i-2}$.

For the inductive step we have the following. 
\begin{align}
Z'^{i}_k &= P_k^{i-1} \cdot X + V_k^{i-1} \label{eq:definition_a} \\
&= (W^{i-1}_k \cdot P_k^{i-2}) \cdot X + W^{i-1}_k \cdot V_k^{i-2} + B^{i-1}_k \label{eq:pseudocode} \\
&= W^{i-1}_k (P_k^{i-2} \cdot X + V_k^{i-2}) + B^{i-1}_k \label{eq:rearranging} \\
&= W^{i-1}_k A_k^{i-1} + B^{i-1}_k \label{eq:inductive_step} \\
&= Z_k^{i} \label{eq:definition_z}
\end{align}
Step 1 
uses the definition of $Z'^{i}_k$, while step 2 
is due to the computation in lines~\ref{line:transform_w} and \ref{line:transform_b} of Algorithm~\ref{alg:recursive_otr}. 
Step 3 
uses the inductive hypothesis and 
step 4 
the definition of $Z^i_k$.

We consider the two possible cases for $Z^i_k$:
\begin{enumerate}
\item $Z^i_k \leq 0$: \otr\ sets $P_k^{i-1}$ and $V_k^{i-1}$ to zeros (line~\ref{line:set_to_zero} of Algorithm~\ref{alg:recursive_otr}) so $Z'^i_k = Z^i_k = A^i_k = 0$.
\item $Z^i_k > 0$: we have from the derivation above that $Z'^i_k = Z^i_k = A^i_k$. 
\end{enumerate}


Thus, $Z'^i_k = A^i_k$. 

The parameters $P_1^{m-1}$ and $V_1^{m-1}$ of leaf nodes are never set to zero because line~\ref{line:set_to_zero} of Algorithm~\ref{alg:recursive_otr} is not reached for them. Therefore, $Z'^{m}_1 = Z_1^{m} = P_1^{m-1} \cdot X + V_1^{m-1}$. 
In regression tasks, $T$ returns the value $Z'^{m}_1 = Z_1^{m}$ of the leaf node as its prediction (line~\ref{line:regression}). In classification tasks, the leaf node with label $0$ is reached if $P_1^{m-1} \cdot X + V_1^{m-1} \leq 0$ for the node representing the output neuron because $g(z) \leq 0.5$ if and only if $z \leq 0$ for the Logistic function $g$; the leaf node with label $1$ is reached otherwise. Therefore, $T$ and $R$ produce the same output for a fixed input $X$. 
\end{proof}

\otr\ is also able to induce oblique decision trees that are equivalent to densely connected networks~\cite{densely-connected}. In densely connected networks, every neuron in layer $i$ receives as input the output of all neurons in layers $j < i$. The values of $P_k^l$ and $V_k^l$ must be appended to the matrices $P^i$ and $V^i$ for $i > l$ because the output of the $k$-th neuron of layer $l$ is used as input in all the following layers. 

\begin{corollary}
Let $W$ and $B$ be the weights and biases of a densely connected neural network $R$ whose hidden layer units use ReLU activation functions and the single unit of its output layer uses a Logistic or a linear activation function. The oblique decision tree $T$ \otr\ induces with $W$ and $B$ is equivalent to $R$, that is, $T$ and $R$ produce the same output for any input $X$.  
\end{corollary}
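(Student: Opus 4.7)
The plan is to mimic the proof of the theorem almost verbatim, using induction on the layer $i$ to show that $Z'^{i}_k = A^{i}_k$ along the inference path for any fixed input $X$. The base case $i = 2$ goes through unchanged: the first hidden layer of a densely connected network receives only $X$ as input, so $P^{1} \gets W^{1}$ and $V^{1} \gets B^{1}$ still satisfies $Z'^{2}_k = Z^{2}_k$, and the left/right branch assignment still enforces $Z'^{2}_k = A^{2}_k = \mathrm{ReLU}(Z^{2}_k)$.

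For the inductive step I would strengthen the inductive hypothesis to state that $A^{j}_k = P^{j-1}_k \cdot X + V^{j-1}_k$ for every layer $j < i$ and every neuron $k$ on the current inference path. In a densely connected network $Z^{i}_k$ is a weighted sum over \emph{all} outputs from \emph{every} earlier layer plus a bias, rather than just the outputs of layer $i-1$. The modification described before the corollary appends the row vectors $P^{l}_k$ and scalars $V^{l}_k$ produced for each earlier neuron into the matrices $P^{i-1}$ and $V^{i-1}$ used when computing $Z'^{i}_k$. Thus when \otr\ computes $P^{i-1}_k$ as the appropriate weighted combination of these appended rows (analogous to line~\ref{line:transform_w}) and similarly for $V^{i-1}_k$ (analogous to line~\ref{line:transform_b}), substituting the inductive hypothesis into the dense version of $Z^{i}_k = \sum_{j<i} W^{i-1,j}_k \cdot A^{j} + B^{i-1}_k$ collapses the sum to $P^{i-1}_k \cdot X + V^{i-1}_k$, giving $Z'^{i}_k = Z^{i}_k$.

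The ReLU case analysis is then identical to the theorem: if $Z^{i}_k \leq 0$, the left branch is taken, $P^{i-1}_k$ and $V^{i-1}_k$ are zeroed, and $Z'^{i}_k = 0 = A^{i}_k$; if $Z^{i}_k > 0$, the right branch is taken and the equality $Z'^{i}_k = Z^{i}_k = A^{i}_k$ is preserved. The leaf-node argument also transfers unchanged: the output neuron is never zeroed, so $Z'^{m}_1 = Z^{m}_1$, and the classification/regression output rules handle the final step as in the theorem.

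The main obstacle will be notational bookkeeping in the dense case: one must carefully index which row of $P^{i-1}$ corresponds to which earlier neuron, and verify that the weighted combination done by \otr\ at layer $i$ exactly matches the dense-network sum $\sum_{j<i} W^{i-1,j}_k \cdot A^{j}$ term by term. Once this correspondence is spelled out, the remaining algebra and case analysis reduce to the steps already established in the proof of the theorem.
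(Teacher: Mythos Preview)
Your proposal is correct and is exactly the natural completion the paper intends: the paper does not give a separate proof of the corollary at all, but merely states the one-line modification (append $P^{l}_k$ and $V^{l}_k$ to the matrices $P^{i}$ and $V^{i}$ for all $i>l$) and then asserts the corollary, implicitly relying on the theorem's inductive argument carrying over. Your strengthened inductive hypothesis and the dense-network bookkeeping you outline are precisely what is needed to make that implicit step rigorous.
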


\paragraph{Leaky ReLU.} 
\otr\ can be modified to handle Leaky ReLU functions: $\text{LReLU}(x) = \max(x, a \cdot x)$, where $0 < a < 1$. Similarly to ReLUs, the right child of a node handles the $A^i_k = Z^i_k$ case, and the left child handles the $A^i_k = a \cdot Z^i_k$ case. Instead of setting the values of $P_k^l$ and $V_k^l$ to zero in line~\ref{line:set_to_zero} of Algorithm~\ref{alg:recursive_otr}, \otr\ assigns the values of $a (W^{l}_k \cdot P^{l-1})$ to $P^l_k$ and $a(W^{l}_k \cdot V^{l-1} + B^{l}_k)$ to $V^l_k$. 

\paragraph{Multi-Class Tasks.} 
For multi-class tasks \otr\ can handle networks with multiple neurons in the output layer and Softmax functions. This is achieved by implementing, as part of the tree, a maximum function for the $Z'$-values of the output neurons. For example, a node $s$ checks if $Z'^{m-1}_1 - Z'^{m-1}_2 \leq 0$ (is output $2$ larger than output $1$?) and $s$'s left child checks $Z'^{m-1}_2 - Z'^{m-1}_3 \leq 0$ (is output $3$ larger than output $2$?) while its right child checks $Z'^{m-1}_1 - Z'^{m-1}_3 \leq 0$, etc. Another way of handling multi-class tasks is to train one model for each label. 

\paragraph{Sparse Oblique Trees.} Axis-aligned decision trees 
tend to be easier to interpret than oblique trees because each axis-aligned node considers a single feature (e.g., $x_i \leq b$). By contrast, each node in an oblique tree considers all features. In sparse oblique trees, some of the weights $p_i$ related to $x_i$ are set to zero, thus increasing interpretability~\cite{TAO}. 
\otr\ allows for the induction of sparse trees if one uses L1-regularization while training the underlying neural network~\cite{tibshirani96regression}. L1 regularization is effective in inducing sparse oblique trees with \otr\ if the network has a single hidden layer. Since L1 regularization is able to drive some of the weights of the model toward zero, 
nodes representing neurons in the model's first hidden layer will have some of its $p_i$-values also set to zero (because $P^1 = W^1$). The nodes that represent neurons in layers $i > 2$ are less likely to be sparse as they depend on a combination of $w$-values being set to zero or adding up to zero in the operation $W^i_k \cdot P^{i-1}$. 

\paragraph{Pruning and Approximation.} 
Recent work showed that ReLU networks have surprisingly few activation patterns compared to the maximum possible \cite{activ_region}. \citeauthor{Lee2020Oblique}~\shortcite{Lee2020Oblique} also observed this phenomenon in their \lcn\ experiments. Activation patterns that are not encountered (or rarely encountered) on a sufficiently large data set can possibly be removed with little to no effect on the model's performance. For \otr\ trees, this translates into pruning entire inference paths, which can significantly reduce the complexity of the learned model and possibly increase their interpretability. We propose \otrhat, an approximation of \otr\ in which we prune branches that are not reached while executing the neural model in the environment. We also propose \otrhat($k$), where only the $k$ inference paths most frequently encountered are kept and all others are removed. As we observe in our experiments, for some problems, training a larger \otr\ model and then approximating it using \otrhat\ is easier than training an \otr\ model of reduced size.

\paragraph{PID Controller Policies.}
Proportional-integral-derivative (PID) controllers have long been used to stabilize control systems due to their robustness and stability guarantees. More recently, discretized PID controllers have been used in \PIRL\ \cite{NDPS,propel,qiu2022programmatic} where the proportional ($P$), integral ($I$), and derivative ($D$) are approximated as follows: \linebreak $P = (\epsilon - s), I = \textbf{fold}(+, \epsilon - h), D = \textbf{peek}(h, -1) - s$, where $\epsilon$ is the known fixed target for which the system is stable, $s$ is the current state of the environment, $h$ is a history of the previous $k$ states (we use $k=5$), \textbf{fold} is a higher-order function which sums the input sequence along its dimension, and \textbf{peek}($h$, -1) returns the last state in $h$. 

The only change to the DSL in Figure \ref{fig:dsl} is to the controller symbol, which is replaced by $C ::=  \theta_P \cdot P + \theta_I \cdot I + \theta_D \cdot D$. Neural networks can learn these parameters, denoted $\theta_P, \theta_I$ and $\theta_D$, which means that we can use \otr\ to synthesize a PID controller policy. However, the learned parameters are functions of the input (e.g., $\theta_P = \theta_P(s) 
$), which is more expressive than the PID controller DSL from other works. 

\section{Empirical Evaluation}

In this section, we describe our empirical methodology for evaluating \otr\ as a means of synthesizing programmatic policies to solve control problems. 

\paragraph{Problem Domains.} 
We evaluate \otr\ with depth-six oblique decision trees on eight continuous action control problems from OpenAI Gym \cite{1606.01540} and MuJoCo \cite{todorov2012mujoco}: Reacher, Walker2D, Hopper, HalfCheetah (HC), Ant, Swimmer, BipedalWalker (BW), and Pendulum. With the exception of Pendulum, the action spaces are multi-dimensional, so each action dimension has a linear model in the leaf nodes. We use depth-six trees so our results are comparable to those of \prl, which also uses depth-six trees. The reinforcement learning models were implemented in the Stable Baselines3 repository~\cite{stable-baselines3}. The experiments for \otr\ were completed in approximately 504 hours on a single CPU. 

\paragraph{Learning Algorithms.}
Since we are interested in finding a programmatic policy representation and not a programmatic value function, we use actor-critic methods to train small neural network policies and arbitrarily sized value networks, and apply \otr\ on the policy network only. We run PPO \cite{https://doi.org/10.48550/arxiv.1707.06347} for 3 million steps in BW and Swimmer, SAC \cite{haarnoja2017soft} for 3 million steps in Pendulum and Reacher for 4 million steps in Walker2D, Hopper, HC, and Ant. We `squish' the actions the model produces by using a hyperbolic tangent function in the leaf nodes when using SAC. 

\paragraph{Hyperparameters.} We use hyperparameter values that are similar to the default values of known open source implementations. For BW and Swimmer, we use the default hyperparameters from Stable-Baselines3 for PPO: learning rate of $0.0003$, minibatch size of $64$, GAE parameter of $0.95$, $10$ epochs, clip factor of $0.2$, and discount factor of $0.99$. For Swimmer, we use a discount factor of $0.9999$. For Pendulum, Reacher, HC, Hopper, Walker2d, and Ant we use SAS with hyperparameter values similar to the default values for TD3 in Stable-Baselines3: discount factor of $0.99$, learning rate of $0.001$, minibatch size of $100$, buffer size of $1000000$, $10000$ learning starts and added Gaussian noise $\mathcal{N}(0, 0.1)$ to actions. We use ReLU activation functions and critic networks with two hidden layers of size 256 each on all domains except Walker2d, for which we use LeakyReLU activation functions.

\paragraph{Baselines. }
We use several algorithms from the PIRL literature as baselines in our experiments. Namely, we use \ndps~\cite{NDPS}, \prl~\cite{qiu2022programmatic} and \viper~\cite{viper}, and a modified version of \viper\ that uses linear model trees~\cite{linear-model-trees} (\lmviper) as the baselines. \ndps\ and \prl\ use the language from Figure \ref{fig:dsl}, \viper\ and \lmviper\ produce axis-aligned trees. We use \ndps\ and not \propel~\cite{propel} because previous work noted that the former performs better than the latter on the OpenAIGym and Mujoco domains~\cite{qiu2022programmatic}. We also train ReLU networks with 32 neurons and apply \otrhat\ to demonstrate the ability of \otr\ to synthesize longer programs. For each domain, we perform three independent runs (5 for \otr) of each system and evaluate the best policy found during training for 100 consecutive episodes. 

\begin{table*}[h!]
\small
\setlength{\tabcolsep}{4pt}
\centering
\caption{Reward and standard deviation of depth-6 policies over 100 episodes, averaged over three (five for \otr) independent runs of each algorithm. The last column shows the performance for depth-32 \otrhat\ policies. In parentheses is the estimated pruned depth, log$_2a$ ($a$ is the number of activation patterns realized over 100 episodes). The best average for each domain is highlighted in bold.}
\begin {center}
\begin{tabular}{
l@{\hspace{-1mm}} 
D{,}{\hspace{0.5mm}\pm\hspace{0.5mm}}{-1}@{\hspace{-0.75mm}} 
D{,}{\hspace{0.5mm}\pm\hspace{0.5mm}}{-1}@{\hspace{-0.75mm}} 
D{,}{\hspace{0.5mm}\pm\hspace{0.5mm}}{-1}@{\hspace{0mm}} 
D{,}{\hspace{0.5mm}\pm\hspace{0.5mm}}{-1}@{\hspace{0.2mm}} 
D{,}{\hspace{0.5mm}\pm\hspace{0.5mm}}{-1}@{\hspace{0.2mm}}  
D{,}{\hspace{0.5mm}\pm\hspace{0.5mm}}{-1}@{\hspace{-0.2mm}} 
c}
\toprule
\text{\small Environment}
& \multicolumn{1}{c}{\text{\viper}}
& \multicolumn{1}{c}{\text{\lmviper}}
& \multicolumn{1}{c}{\text{\ndps}}
& \multicolumn{1}{c}{\text{\prl}}
& \multicolumn{1}{c}{\text{\otr}}
& \multicolumn{2}{c}{\text{\otrhat}} \\
\midrule
\textsc{Reacher} & \small -5.2,0.2 & \small -4.2,0.1 & \small -5.8,0.1 & \small -5.1,0.1 & \small -4.8,0.3 & \bf \small -3.7,\bf\small0.1 & \hspace{1mm} (9.0) \\ 
\textsc{Walker2d}  & \small 771.4,76.3 & \small 4600.2,318.1  & \small 3671.7,1196.2  & \small 5178.0,16.3 & \small 4796.7,445.5 & \bf \small 5237.8, \bf 385.0 & \hspace{1mm} \small (8.4) \\
\textsc{Hopper} & \small 1164.8,205.9  & \small 2052.3,185.5 & \small 1646.3,588.2 & \small 3535.3,23.3 & \small 3715.3,61.6 & \bf\small 3749.4,\bf\small60.9 & \hspace{1mm} \small (9.4) \\ 
\textsc{HC}    & \small 1762.7, \small 1056.0 & \small 3485.2,1272.9 & \small 3569.3,50.2    & \small 10772.7,60.2  & \small 7740.5,411.3 & \hspace{1mm} \bf \small11428.2, \bf  \small 70.4 & \hspace{1mm} \small (11.0) \\ 
\textsc{Ant}            & \small 3707.9, \small 135.8  & \small 4505.0,24.3  & \small 4874.7,188.9   & \bf \small 5679.7, \bf 844.3  & \small 4263.0,639.0 & \small 5665.0,240.7 & \hspace{1mm} \small (11.1) \\
\textsc{Swimmer}  & \small 362.6,0.9  & \small 362.0,0.6 & \small 334.7,0.9 & \small 340.3,31.4  & \bf \small 366.3,\bf 1.2  & \small 270.7, 74.9 & \hspace{1mm} (7.6) \\
\textsc{BW}  & \small 192.5, \small 57.2 & \bf \small 300.9, \bf 7.2 & \small 273.0,12.3 & \small 274.3,18.3 &  \small 297.4,13.9  & \small 300.6,14.1 & \hspace{1mm} (12.3) \\
\textsc{Pendulum}   & \small -186.4, \small 14.5   & \small -224.5,30.7   & \small -146.7,4.5  & \small -144.7,3.3  & \bf \small -137.1, \bf 0.7  &  \small-141.9,0.9 & \hspace{1mm} (8.0) \\
\bottomrule
\end{tabular}
\end{center}
\label{tab:RL_Exp}
\end{table*}

\begin{figure*}[h!]
\begin{center}
\begin{minipage}[h]{13cm}
    \begin{algorithmic}[0]
        \IF{$7.78 - 15.7x - 21.7y - 8.48 \dot{\omega} \leq 0$} 
        \RETURN $[3.15, 3.24, 0.65] \cdot P + [0.12, 0.52, 0.04] \cdot I + [10.94, 11.89, 0.13] \cdot D$
        \ELSE
        \RETURN $[3.15, 3.24, 0.65] \cdot P + \theta_I \cdot I + [10.94, 11.89, 0.13] \cdot D$
        \ENDIF
    \end{algorithmic}
\vspace{2mm}

where $\theta_I = [-0.25 + 0.75x + 1.04y + 0.41 \dot{\omega}, 0.52, 0.10 - 0.13x - 0.17y - 0.07 \dot{\omega} ]$
\end{minipage}
\end{center}
\caption{\otr\ PID controller policy for Pendulum. Achieves an average reward of -162.6 over 1000 consecutive episodes.}
\label{fig:policy_pendulum}
\end{figure*}

\paragraph{Table of Results.}

The average reward each algorithm achieves is reported in Table \ref{tab:RL_Exp}, where the best average result for each domain is highlighted in bold. The table also presents the standard deviation of the return on the runs. \otr\ is capable of solving a variety of problems using small actor networks and outperforms oracle-guided approaches (\viper, \lmviper, and \ndps) by a large margin in Walker2D, Hopper, and HC and is overall competitive with \prl. \otr\ has better average reward in six of the eight tested domains, but \prl\ is a close competitor in the six domains. \prl\ outperforms \otr\ in HC and Ant. In contrast to \prl, \otr\ is capable of scaling and training longer programmatic policies. 
\otrhat\ policies trained with 32 neurons perform well in all domains with the exception of Swimmer. The depths of the pruned trees range from 8 to 12, which is a major reduction from the original depth of 32. These results show that \otr\ can synthesize longer and more powerful policies, at the cost of possibly having less interpretable policies.  

\paragraph{Examples of Programs.}
Figure~\ref{fig:policy_mcc} shows an example of a depth-one programmatic policy \otr\ derived for the mountain car domain. In mountain car, the agent controls a car that starts at the bottom of a valley and needs to reach the top of the mountain on the right. The state is defined by the position of the car, $x$, and its velocity, $v_x$. The action is a one-dimensional real number in $[-1, 1]$ and represents the car's acceleration. Simply applying rightward acceleration is not enough to reach the peak; the car must first gain potential energy by moving to the left. The policy is trained using DDPG ~\cite{ddpg} with one hidden neuron and is interpreted as follows. The velocity term dominates the other terms in the decision node, so the equation can be simplified to $v_x \leq 0$. The left leaf node accelerates the car to the left. In the right leaf node, the velocity term again dominates the equation by applying a large acceleration which is clipped to 1. As such, we can simplify the policy and interpret it as follows: if the car is moving to the left, accelerate to the left. Otherwise, accelerate to the right. 

\begin{figure} 
\begin{center}
\advance\leftskip-4mm
\scalebox{0.9}{
\begin{tabular}{c|c} 
\setlength{\tabcolsep}{0.0mm}
\vspace{-1mm}
\begin{minipage}[h]{6.1cm}
    \begin{algorithmic}[0]
        \IF{$-2.2 -3.8x + 114.3v_x \leq 0$}
        \RETURN $-6.1$
        \ELSE
        \RETURN $-102.3 -169.8x + 5116.5 v_x$
        \ENDIF
    \end{algorithmic}
\end{minipage}
& \advance\leftskip-0.4mm
 \begin{minipage}[h]{2.6cm}
    \advance\leftskip-0.4mm
    \begin{algorithmic}[0]
    \advance\leftskip-0.4mm
    \IF{$v_x \leq 0$}
    \RETURN $-1.0$
   \ELSE
   \RETURN $1.0$
   \ENDIF
   \end{algorithmic}
\end{minipage}
\end{tabular}
}
\end{center}
\caption{Original (left) and simplified (right) \otr\ programmatic policies for Mountain Car Continuous. These policies achieve an average reward of $92.8$ and $92.2$, respectively (90 is considered solved).}
\label{fig:policy_mcc}
\end{figure}



Figure \ref{fig:policy_pendulum} shows an example of a depth-one PID controller policy \otr\ derived for pendulum. The state is given by the vector $[x = cos(\omega), y = sin(\omega), \dot\omega]$, where $\omega$ is the angle relative to the upright position and $\dot\omega$ is the angular velocity. The objective is to balance the pendulum upright by applying a leftward or rightward torque, so the stable point is $\epsilon = [1, 0, 0]$. The policy is trained using DDPG with a single hidden neuron and nine output neurons (three for each $\theta_P, \theta_I$ and $\theta_D$) for the actor network. We use aggressive L1 regularization ($\alpha = 2.5$) on the network's output weights so we can learn a sparse model. The resulting policy achieves an average reward of -162.6 over 1000 consecutive episodes. Note that L1 regularization is effective in eliminating the dependence of the policy on the parameters $\theta_P$ and $\theta_D$.

\section{Discussion and Conclusions}

In this paper, we showed that a connection between ReLU neural networks and oblique decision trees can be used to translate neural policies learned with actor-critic methods into programmatic policies. This is because oblique decision trees can be seen as programs written in a commonly used domain-specific language for encoding programmatic policies. We showed that this mapping can handle discrete and continuous output tasks
and can train sparse oblique trees by training ReLU networks with a single hidden layer and L1 regularization. 
The mapping from ReLU neural networks to programs offers a drastically simpler method of synthesizing programmatic policies. 
All one needs to do is encode the actor in a ReLU network so that \otr\ is able to convert it to a programmatic policy. \prl\ is also able to use actor critic algorithms, but the time complexity for evaluating the model during training is exponential on the depth of the tree. \otr's cost is only linear because the model is represented as a neural network during training, which is exponentially smaller than the underlying tree of the model. Despite being simpler than \prl, we showed that the programmatic policies \otr\ synthesizes are competitive with those \prl\ synthesizes if the programs are small; \otr\ outperforms \prl\ in almost all control problems evaluated if it is allowed to synthesize longer programs.

\paragraph{Limitations. }
The programs that we can induce with ReLU networks only support affine transformations of the input data and if-then-else structures. By contrast, program synthesis methods can handle more complex languages, including those with loops (e.g.,~\cite{sketch-sa}). 
Some problem domains might benefit from a language with loops to allow repetitive actions (e.g., ``while not facing a wall, move to the right'') such as the car domain of~\citeauthor{Inala2020Synthesizing}~\shortcite{Inala2020Synthesizing}. Another limitation of \otr\ is that the program structure (i.e., the network architecture) must be provided by the user. By contrast, \prl\ finds not only a policy, but also the structure of the program that encodes the policy~\cite{qiu2022programmatic}. 
Although the main objective of our research is to generate interpretable classifiers and policies, we did not evaluate the interpretability of our models. Similarly to previous work (e.g.,~\cite{Lee2020Oblique,qiu2022programmatic}), we assumed that the models represented with oblique trees are inherently interpretable. Future research needs to evaluate the interpretability of these programs. 


\bibliographystyle{named}
\bibliography{ijcai23}

\end{document}